\documentclass[11pt]{amsart}
\usepackage[active]{srcltx}
\usepackage{mathrsfs}
\usepackage[T1]{fontenc}
\usepackage{tcolorbox}

\usepackage{latexsym,enumerate}
\usepackage{amsmath,amssymb,amsthm,amsfonts,latexsym}
\usepackage{graphicx}
\usepackage[bookmarks]{hyperref}

\hypersetup{backref, colorlinks=true}
\hypersetup{backref, colorlinks=true, colorlinks   = true,
	urlcolor     = blue, linkcolor = blue, citecolor   = magenta
}
\def\adh#1{\overline{#1}}

\let\=\partial

\newtheorem {pro}{Proposition}[section]
\newtheorem {thm}[pro]{Theorem}%[section]
\theoremstyle{definition}
 \newtheorem {rem}[pro]{Remark}%[section]
\newtheorem {dfn}[pro]{Definition}%[section]

\newtheorem {exa}[pro]{Example}

\newcommand{\xca}{\mathcal{X}}

\newcommand{\tra}{\mathbf{tr}}

\newcommand{\zca}{\mathcal{Z}}

\newcommand{\R}{\mathbb{R}}

\newcommand{\N}{\mathbb{N}}
\newcommand{\cc}{\mathscr{C}}

\newcommand{\et}{\quad \mbox{and} \quad }

\newcommand{\hn}{\mathcal{H}}

\newcommand{\fbf}{\mathbf{f}}

\newcommand{\omd}{{\Omega}}

\newcommand{\wbf}{\mathbf{w}}\newcommand{\ybf}{\mathbf{y}}

\newcommand{\F}{\mathcal{F}}
\newcommand{\ep}{\varepsilon}

\newcommand{\pa}{\partial}

\newcommand{\hh}{\mathcal{V}}

\newcommand{\bou}{\mathbf{B}}
\newcommand{\sph}{\mathbf{S}}

\newcommand{\supp}{\mbox{\rm supp}}
\newcommand{\xo}{{x_0}}
\title[]{ Semi-supervised learning on singular data sets}

\makeatletter
 \thanks{Research partially supported
by the NCN grant  2021/43/B/ST1/02359.}
\@namedef{subjclassname@2020}{%
\textup{2020} Mathematics Subject Classification}
\makeatletter

\@addtoreset{equation}{section}

\makeatother

\author[ G. Valette]{ Guillaume Valette}

\address[G. Valette]{Instytut Matematyki Uniwersytetu
Jagiello\'nskiego, ul. S. \L ojasiewicza 6, Krak\'ow, Poland}\email{guillaume.valette@im.uj.edu.pl}

\keywords{Machine learning, Semi-supervised learning, singular data cloud, subanalytic set, Sobolev spaces}

\subjclass[2020]{62J07, 68T05, 68Q32, 46E35, 32B20, 14P10}

\begin{document} \maketitle
	\begin{abstract}
	We propose to model singular data clouds by subanalytic sets, and develop an approach to semi-supervised learning in this setting. It relies on a Morrey-type embedding for Sobolev spaces relative to globally subanalytic measures with possibly singular support.
As an illustration, we present a greedy method based on a Monte Carlo approximation of the Sobolev norm of  the candidate solutions. We report numerical experiments on point clouds sampled from real algebraic varieties, as well as a test on a classification problem based on MNIST images  downsampled  into $7\times7$.
		
		% They include all the sets defined by finitely many inequalities on analytic functions.  the globally subanalytic category  are indeed omnipresent in machine learnig problem
		\end{abstract}
		
	\section{Introduction}
	The main task of machine learning is to determine how the available information on a sample can be reasonably generalized to the entire cloud \cite{bishop, survey1, survey2}.  This issue is particularly delicate when labeled data are sparse, and 
		  semi-supervised learning  can then prove a relevant strategy to overcome the difficulties \cite{bns, towards}. Unlabeled data are then  useful to carry information  toward the cloud and working intrinsically  reduces the dimension of the problems.% by working intrinsically. 
	
	Semi-supervised learning algorithms often require  the data to be supported  on a smooth manifold or on an open set with Lipschitz or smooth boundary \cite{semisupermanif,slepthor}, although it is  hard to account why the support of the distribution of the cloud should always be so. %why a cloud should always be distributed with respect to a measure $\mu$ supported on such a set
	 Many authors also assume in addition that the density of the distribution measure is bounded away from zero and infinity. Data are often mapped into feature spaces in order to reduce the ambient dimension, and the image can be singular even if the feature map is polynomial.  The  image of the cloud is then distributed with respect to the push-forward of the distribution measure of the given cloud under the feature map. Its density can thus tend to zero or infinity, even if the feature map is analytic or polynomial.

	In this article, we suggest  globally subanalytic sets and measures as relevant models for singular clouds.  The globally subanalytic category, which includes all the compact sets defined by finitely many equalities on analytic functions, is stable under analytic mappings (restricted to compact sets).          
  Subanalytic sets may admit very general singularities within their closure and a large number of  feature maps involved in machine learning, e.g. polynomial, Gaussian, Wendland, trigonometric, hyperbolic trigonometric, are  subanalytic, which makes the subanalytic sets and measures attractive to modelize  clouds in feature spaces.

	The main motivation for authors to rely on the so-called manifold hypothesis is generally to perform functional analysis. Globally subanalytic sets admit stratifications and cell decompositions, as well as many finiteness properties. Moreover, although these sets admit non metrically conical singularities,  several  results of analysis, such as Poincar\'e inequalities or Sobolev embedding \cite{poincwirt, poincfried, trace, lprime, platrub}, were proved to generalize to these singular sets. Our present idea is to rely on these results so as to produce machine learning algorithms for clouds that are supported on such a singular set.
	
	In order to make it more precise, let us recall
 Corollary $3.5$ of \cite{platrub}, which provided a Morrey-type embedding in this framework:% (we refer to section \ref{sect_math} for  definitions and notations):
	\begin{thm}\label{thm_mor}
		Let $\omd$ be a bounded  open globally subanalytic subset  of $\R^n$ and $\Phi:(A,\mu)\to \omd$  a  globally subanalytic mapping, with $(A,\mu)$ globally subanalytic measured space. Given $p\in [1,\infty)$, there is $k\in \N$ for which we have the  natural continuous embedding
		\begin{equation}\label{eq_embed}
			W_ {\Phi_*\mu}^{k,p}(\omd)\hookrightarrow \check{\cc}^{0,1}(\supp_\omd \, \Phi_*\mu). \end{equation}
	\end{thm}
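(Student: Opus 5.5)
The plan is to reduce the statement to the Lebesgue case on subanalytic manifolds, where Morrey-type embeddings for singular sets are available. I would then transfer Sobolev control through the pushforward measure. First, I would stratify. By the cell decomposition and stratification theorems for globally subanalytic sets, the support $S=\supp_\omd \Phi_*\mu$ is globally subanalytic. The definition of globally subanalytic measured space means $\mu$ has a globally subanalytic density with respect to Hausdorff measure on $A$, and this carries over to $\Phi_*\mu$. So I would decompose $S$ into finitely many cells $C_1,\dots,C_m$, each a Lipschitz image of a bounded open cell in some $\R^{d_i}$. On each cell the density of $\Phi_*\mu$ with respect to $\hn^{d_i}$ is a globally subanalytic function $\rho_i$. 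By the \L ojasiewicz inequality (after refining the cells), $\rho_i$ is bounded below by $c\, d(x,\pa C_i)^{N}$ for some $N$, at least on the part of the cell away from where $\rho_i$ vanishes identically, and that part does not contribute to the support.

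Second, I would remove the weight at the cost of derivatives. For $u$ smooth on $\omd$ with finite $W^{k,p}_{\Phi_*\mu}$ norm, and a cell $C$ of dimension $d$, the trace of $u$ and its derivatives up to order $k$ lie in $L^p(C,\rho\,\hn^d)$. The aim is to show that for $k$ large, $u|_C$ lies in an unweighted space $W^{j,q}(C)$ with $j q>d$, up to a degenerate weight controlled by a power of the distance to the boundary. I would do this through a Hardy-type inequality along the one-dimensional fibres of the cell, in the standard inductive cell structure: each extra derivative absorbs one power of the distance. With $k\ge N+d+1$, this should give a uniform modulus of continuity on each cell, namely a Lipschitz bound in the inner (geodesic) metric of the cell. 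Compatibility on boundaries between cells should come from the fact that $u$ is defined on the open set $\omd$, and from continuity of traces.

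Third, I would globalize. $\check{\cc}^{0,1}$ presumably denotes functions that are Lipschitz for the inner metric of $S$, or something similar. I would glue the cellwise estimates using the finiteness of the stratification and the fact that globally subanalytic sets are connected by arcs of length comparable to the inner distance. Combined with a density argument (smooth functions are dense by definition of $W^{k,p}_{\Phi_*\mu}$ as a completion), this yields a continuous injective linear map. Injectivity holds because the $L^p$ part of the norm vanishes only on functions that are zero $\Phi_*\mu$-a.e., hence zero on the support by continuity.

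The main obstacle is the second step. The degeneracy of $\rho$ near singular strata is not metrically conical, so a single Hardy inequality is not obviously enough. I would first try to apply the Lipschitz-cell ("regular cell") preparation theorems, which make the cells bi-Lipschitz to standard cells with boundary distance comparable to a product of coordinate functions. I would then iterate one-variable weighted Hardy estimates coordinatewise, taking $k$ large enough to absorb all the exponents from the \L ojasiewicz bounds.
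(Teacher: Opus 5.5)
Your outline (split the pushforward into pieces, spend derivatives to absorb the degeneracy of the weight, then apply a Morrey-type estimate on each piece) is in the spirit of the argument of \cite{platrub} that the paper imports. However, both load-bearing steps fail as you state them.

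First, the density of $\Phi_*\mu$ on a stratum is in general \emph{not} globally subanalytic, as the paper stresses just before Proposition \ref{pro_push}. Pushing Lebesgue measure on $(0,1)^2$ forward by $(x,y)\mapsto xy$ gives the density $-\ln t$ on $(0,1)$. So the \L ojasiewicz inequality is not available. What you actually have is the fiber-integral form of $\fbf_M$ in Proposition \ref{pro_push}. The power-type lower bound, and also the subanalyticity of the support, must be derived from it; this is the content of Proposition 2.5 of \cite{platrub}.

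Second, and more seriously, the Morrey step on a singular cell is false in the form you use. On the cusp $C=\{0<x<1,\ 0<y<x^m\}\subset\R^2$, the unbounded function $x^{-a}$ lies in $W^{j,q}(C)$ as soon as $(a+j)q<m+1$. So no condition of type $jq>d$ yields boundedness: the required integrability depends on the order of the cusp. Lipschitz (L-regular) cell decompositions cannot repair this by making cells bi-Lipschitz to standard ones. Bi-Lipschitz maps preserve $W^{1,q}$ and boundedness, and $W^{1,q}\hookrightarrow L^\infty$ for $q>2$ on planar Lipschitz domains. The same example with $2<q<m+1$ therefore shows that a cusp with $m>1$ is not bi-Lipschitz to such a domain. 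What is missing is a genuine Morrey-type embedding for Sobolev spaces of bounded subanalytic manifolds, valid for $p$ large enough in terms of the geometry. This is Corollary 6.3 of \cite{lprime}, obtained via normalization (cf. Section \ref{sect_cpct}), and the embedding is induced from it (Remark \ref{rem_compact}, proofs of Theorems 3.4 and 3.5 of \cite{platrub}). The heuristic ``each derivative absorbs one power of the distance'' with $k\ge N+d+1$ does not replace it.

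Two further points. A bound for the inner Lipschitz norm is a sup bound on $u$ and $\nabla u$ over $X$, not a modulus of continuity of $u$. So the Morrey-type estimate must also be applied to first derivatives. Doing this with higher \emph{tangential} derivatives of $u|_C$ brings in the curvature of $C$, which is unbounded near singular strata. The way around it is to apply first-order estimates on each manifold $M\in\F$ to the restrictions $\pa^\alpha u|_M$, $|\alpha|\le k-1$, whose tangential gradients are dominated by ambient derivatives of order $|\alpha|+1$. Once $\sup_X(|u|+|\nabla u|)\le C\|u\|_{W^{k,p}_{\Phi_*\mu}(\omd)}$ for $u\in\cc^{k,p}_{\Phi_*\mu}(\omd)$, the inner Lipschitz bound follows by integrating along arcs in $X$, with no gluing of traces, and it extends to the completion.

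Finally, your injectivity argument is wrong. Take $\mu=\hn^1$ on $(-1,1)\times\{0\}\subset\omd=(-1,1)^2$. The function $u(x,y)=y$ is a nonzero element of $W^{k,p}_\mu(\omd)$, since $\pa_y u=1$ on $X$, yet its restriction to $X$ vanishes. The ``embedding'' is only the continuity estimate (\ref{eq_cont_mor}), which is how the paper uses it.
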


	Here,   $W^{k,p}_\mu(\omd) $ is the Sobolev space of $\omd$ relative to $\mu$ and $\check{\cc}^{0,1}(\supp_\omd \, \Phi_*\mu)$ denotes the space of functions that are Lipschitz with respect to the inner metric on the support (in $\Omega$) of the pushed-forward measure $ \Phi_*\mu$ (we refer to section \ref{sect_math} for all notations and definitions).

		Let now $\xca:=\{x_i\}_{i\in \N}$ be a sequence in a bounded globally subanalytic open subset $\Omega$ of $\R^n$, and assume  the  $x_i$'s to be drawn randomly with respect to a globally subanalytic probability measure $\mu$ (or a push-forward of such a measure).
	 The above theorem yields that for each $C>0$ the problem 
	\begin{equation*}
	\underset{ ||u||_{W^{k,2}_\mu(\omd)}\le C }{\mbox{argmin}} \;\;\sum_{i=1}^\ell|y_i- u(x_i)|^2 
	\end{equation*}
	has at least one solution in  $W^{k,2}_\mu(\omd) $, for each set of labels $y_1,\dots, y_\ell$. As often, we will   rather solve a ridge regression model that combines the MSE train and the Sobolev norm of the heuristic. We will work with a weighted Sobolev norm, which is equivalent but provides a more relevant minimizer (see (\ref{eq_ridge_theoretical})). The weights actually play the role of regularization parameters.  One advantage of estimating the Sobolev norm relative to the distribution of the sample is that it can be estimated via the law of large numbers by averaging the partial derivatives of $u$ on a random set of data, possibly unlabeled.

 Trying to find or approximate the minimizer in the entire Sobolev space seems to be very difficult, if not untractable. We will thus, as usual in machine learning, rather seek it in a finite dimensional subspace spanned by some chosen hypothesis functions $f_1,\dots, f_b$. One feature of our parametric semi-supervised approach is  however that the norm is independent of the chosen basis. This allows to try different bases in the same algorithm, for instance setting the $f_i$ to be Gaussians centered at points of the cloud, and  reducing successively the radius. We adopted a greedy approach, solving the ridge problem at every level, computing then the residual on the labeled data, and   applying the same method to this residual with a new basis of hypothesis functions (constituted by Gaussians of smaller radius, see Section \ref{sect_algo} for more). Other bases of functions were tried (polynomial, Wendland), and it was tested to combine them in a second phase, which proved to provide robustness, but we only present in Section \ref{sec_tests} tests on our greedy algorithm using Gaussians. It seems that more efforts to select a dictionary of hypothesis functions related to the residual could lead to improvements of the MSE. We present the results on a regression problem on  algebraic subsets of $\R^4$ and then to a classification problem 3,5 vs 8 on MNIST data downsampled into $7\times7$.    %Implementations in Python were realized by   Claude (anthropic), GPT (openAI), and Grok (X). Theorem \ref{thm_sph} is a suggestion of Claude. The rest of the content is due to the author. 

 The above theorem provides no information on the number of derivatives which are needed to ensure  well-posedness. 	
 It is a difficult issue but since this number is   related to the geometry of the cloud and can be chosen independently of the target, it seems  possible to estimate it empirically on target examples.
 	
 The subanalycity assumption can apparently be significantly relaxed, and it seems to be enough to have a measure which is locally (at closure points) bi-Lipschitz equivalent to such a measure (see Remark \ref{rem_equivalence}).
  It is also worthy of notice that the needed mathematical results actually extend to the larger framework of polynomially bounded o-minimal structures expanding the real field \cite{cos,vdd} (see Remark \ref{rem_definissabilite}). 
 Polynomial boundedness is however essential, and for example the results of \cite{lprime, platrub} are no longer valid on an exponential cusp $\{(x,y)\in (0,1)^2:y<e^{-1/x^2}\}$ (endowed with the Lebesgue measure).
 
% It applies to any measure $\nu$ on a locally closed subset $Y$ of $\R^n$ for which we can find a bi-Lipschitz homeomorphism $h:(\omd',\mu')\to (\omd,\mu)$, with $\omd$ bounded globally subanalytic open set and $\mu$ globally subanalytic  measure (or push-forward of such a measure), with $h_*\nu\sim \mu$ . 
% We say locally bi-Lipschitz equivalent at a point $x$ of the closure of the support if $\mu{|}V_x$ is bi-Lipschitz equivalent if this is true in a neighborhood of this point. 
%   to have a locally closed subset of $\R^n$ supported on a set that is locally bi-Lipschitz homeomorphic (locally at every point of $\adh \omd$) to a locally closed bounded subanalytic set. 

 The article is organized as follows. Section \ref{sect_math} recalls some mathematical results that are useful for our purpose. Section \ref{sect_algo} gives our greedy  algorithm, derives consistency from  results of the  preceding section, and establishes statistical bounds relative to the convergence of our Monte Carlo approximation. Section \ref{sec_tests} presents the results of numerical experiments. \section{Analysis on globally subanalytic sets}\label{sect_math}

 	A subset $E\subset \R^n$ is called {\bf semi-analytic} if it is {\it locally}
 	defined by finitely many real analytic equalities and inequalities. Namely, for each $a \in   \R^n$, there is
 	a neighborhood $U$ of $a$ in $\R^n$, and real analytic  functions $f_{ij}, g_{ij}$ on $U$, where $i = 1, \dots, r, j = 1, \dots , s_i$, such that
 	\begin{equation}\label{eq_definition_semi}
 		E \cap   U = \bigcup _{i=1}^r\bigcap _{j=1} ^{s_i} \{x \in U : g_{ij}(x) > 0 \mbox{ and } f_{ij}(x) = 0\}.
 	\end{equation}

 	 The flaw of the  semi-analytic category is that  it is not preserved by analytic morphisms, even when they are proper. To overcome this problem, we prefer working with the  subanalytic sets, which are defined as the projections of the semi-analytic sets.
 	 
 	A subset $E\subset \R^n$  is  {\bf  subanalytic} if 
 	each point $x\in\R^n$ has a neighborhood $U$ such that $U\cap E$ is the image under the canonical projection $\pi:\R^n\times\R^k\to\R^n$ of some relatively compact semi-analytic subset of $\R^n\times\R^k$ (where $k$ depends on $x$).

 	A subset $Z$ of $\R^n$ is  {\bf globally subanalytic} if $\hh_n(Z)$ is a subanalytic subset of $\R^n$, where $\hh_n : \R^n  \to (-1,1) ^n$ is the homeomorphism defined by \begin{equation}\label{eq_nu}
 		\hh_n(x_1, \dots, x_n) :=  (\frac{x_1}{\sqrt{1+|x|^2}},\dots, \frac{x_n}{\sqrt{1+|x|^2}} ).
 	\end{equation}

 	We say that {\bf a mapping (or a function) $f:A \to B$ is globally  subanalytic}, $A \subset \R^n$ and $B\subset \R^m$   globally subanalytic, if its graph is a globally  subanalytic   subset of $\R^{n+m}$.
 	 We will allow the density of measures to be infinite, considering that $f:A\to [0,\infty]$ is globally subanalytic if $f^{-1}(\infty)$ is a globally subanalytic set and the restriction of $f$ to $f^{-1}([0,\infty))$ is a  globally subanalytic function.
  	
  For example, the restriction of an analytic function to a compact subanalytic subset of its domain is
   globally subanalytic. The functions  $\sin 1/x$  and  $e^{-1/x^2}$, $x\in (0,1)$, however fail to be   subanalytic.  Excluding infinite oscillations and flat functions is  the main feature of  the  globally
  subanalytic category and polynomially bounded o-minimal structures in general.
% \end{dfn}
 \subsection{Subanalytic measures}\label{sect_sub_measure}
 Given an $m$-dimensional globally subanalytic  manifold $M$, we denote by $\mu_M$ the measure provided by unit differential $m$-forms, the tangent spaces of  $M$ being endowed with the euclidean metric. This amounts to define $\mu_M$ as the $m$-dimensional Hausdorff measure (restricted to $M$). We    (abusively)   consider $\mu_M$ as defined on the Borel $\sigma$-algebra of a globally subanalytic set  containing $ M$. %To avoid repetitions, we  abbreviate $\mu_M(M)$ into $|M|$.
 
 Given a  $\mu_M$-measurable function $f:M\to [0,\infty]$, we   denote by $f\cdot \mu_M$ the measure $B\mapsto \int_B f(x) d\mu_M(x)$.
 \begin{dfn}\label{dfn_sub_meas}
 	A measure $\mu$ on (the Borel $\sigma$-algebra of) a globally subanalytic set $A$ is said to be globally subanalytic if it is a finite sum of  type $\sum_{i=1}^k \xi_i \cdot \mu_{E_i}$ where, for every $i$, $E_i\subset A$ is a globally subanalytic manifold and   $\xi_i:E_i\to [0,\infty]$ is a globally subanalytic function. We then say that $(A,\mu)$ is a {\bf globally subanalytic measured space}.
 \end{dfn}
 We impose the $E_i$ to be manifolds for convenience. Since globally subanalytic sets can always be decomposed into finitely many globally subanalytic manifolds, this is no loss of generality, and the measure $B\mapsto \sum_{i=1}^k\int_{D_i\cap B} \xi_i \,d \hn^{l_i}$, $l_i=\dim D_i$ ($\hn^{l_i}$ being the Hausdorff measure), with $D_i$ arbitrary globally subanalytic subset of $A$ and $\xi_i:D_i\to [0,\infty]$ globally subanalytic  function  for each $i$, is also globally subanalytic.

 Given a Borel mapping $\Phi:(A,\mu)\to X$, we denote by $\Phi_*\mu$ the push-forward  of $\mu$ under $\Phi$, i.e. $\Phi_*\mu(B):=\mu (\Phi^{-1}(B))$, if $B\subset X$ is a Borel set. Push-forwards of globally subanalytic measures under globally subanalytic mappings may fail to have globally subanalytic density. However, we proved in \cite{platrub}: 
 \begin{pro}\label{pro_push}
 	Let $(A,\mu)$ be a globally subanalytic measured space and let $\Phi:A\to X$ be globally subanalytic. There is a finite collection $\F$  of globally subanalytic manifolds included in $X$ such that
 	$$ \Phi_* \mu =\sum_{M\in \F} \fbf_M \cdot \mu_M, \quad \mbox{with}\quad \fbf_M(y):=\sum_{S\in \tilde{\F}_M} \int_{\Phi_S^{-1}(y)}\zeta_S(x) \,d\mu_{\Phi_S^{-1}(y)}(x), $$
 	where $\tilde{\F}_M$ is for each $M\in \F$ a finite collection of globally subanalytic manifolds comprised in $\Phi^{-1}(M)$ on each of which $\Phi$ induces a $\cc^\infty$ submersion $\Phi_S :S \to M$ and the $\zeta_S:S \to [0,\infty]$ are globally subanalytic functions.
 \end{pro}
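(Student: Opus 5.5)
We will prove Proposition \ref{pro_push} by stratifying $\Phi$ and then applying the coarea formula on each stratum.

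\textbf{Step 1: stratify.} Write $\mu=\sum_{i=1}^k \xi_i\cdot\mu_{E_i}$ as in Definition \ref{dfn_sub_meas}. Since $\Phi_*$ is additive, it is enough to treat a single term $\xi\cdot\mu_E$, where $E\subset A$ is a globally subanalytic manifold. Apply the subanalytic stratification theorem for mappings (a Thom-type, or rank, stratification) to $\Phi_{|E}:E\to X$. This gives finite partitions $E=\bigsqcup_j S_j$ and $X=\bigsqcup_l M_l$ into globally subanalytic $\cc^\infty$ manifolds such that each $\Phi(S_j)$ is some $M_l$ and $\Phi_{|S_j}:S_j\to M_l$ is $\cc^\infty$ of constant rank.

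\textbf{Step 2: discard the strata that contribute nothing.} The strata $S_j$ with $\dim S_j<\dim E$ are $\mu_E$-null, so they can be dropped. For an open stratum $S$ on which the rank of $\Phi_S$ is smaller than $\dim \Phi(S)$, Sard's theorem shows that $\Phi(S)$ has $\mu_M$-measure zero. We will avoid this degenerate case as follows. Constant rank together with $\Phi(S)=M$ forces, via Sard, the rank to equal $\dim M$, because otherwise the image would be null. Hence each retained $\Phi_S:S\to M$ is a submersion.

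\textbf{Step 3: apply the coarea formula.} For a submersion $\Phi_S:S\to M$ and Borel $B\subset M$, the smooth coarea formula gives
$$\int_{\Phi_S^{-1}(B)}\xi\,d\mu_S=\int_B\int_{\Phi_S^{-1}(y)}\frac{\xi}{J\Phi_S}\,d\mu_{\Phi_S^{-1}(y)}\,d\mu_M(y),$$
where $J\Phi_S=\sqrt{\det(d\Phi_S\, d\Phi_S^{*})}$ is the normal Jacobian. Set $\zeta_S:=\xi/J\Phi_S$, taking the value $\infty$ where $J\Phi_S$ vanishes; this does not happen on a submersion, but it may happen after the refinement below. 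The function $\zeta_S$ is globally subanalytic, since $J\Phi_S$ is built algebraically from the derivatives of a globally subanalytic map, and globally subanalytic functions are stable under differentiation, products, and roots.

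\textbf{Step 4: collect terms.} Define $\tilde\F_M$ as the set of retained $S$ with $\Phi(S)=M$, and let $\F$ be the set of such $M$. Summing the identities of Step 3 over $S$ and over the terms $\xi_i\cdot\mu_{E_i}$ yields the formula with $\fbf_M=\sum_{S\in\tilde{\F}_M}\int_{\Phi_S^{-1}(y)}\zeta_S$. The condition $S\subset\Phi^{-1}(M)$ holds by construction.

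\textbf{Main obstacle.} The difficult part is Step 1. One must produce a stratification in which each source stratum maps submersively onto a target stratum and the strata are globally subanalytic rather than merely subanalytic. We plan to obtain it by working in the compactification through $\hh_n$, where global subanalyticity becomes ordinary subanalyticity on a bounded set. Then we would proceed by induction on dimension: at each stage remove the bad sets, namely the critical-rank locus and the non-locally-trivial locus. Both are subanalytic, and they have lower dimension by Sard and by generic smoothness.
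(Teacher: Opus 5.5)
The paper does not prove this proposition; it quotes it from \cite{platrub}. Your argument is correct and is the standard one: stratify $\Phi_{|E}$ so that each top-dimensional stratum $S$ of $E$ is mapped with constant rank onto a stratum $M$ of $X$ (so $\Phi_S$ is a submersion by Sard), then apply the coarea formula with $\zeta_S=\xi/J\Phi_S$. In Step 1, cite the known stratification theorem for globally subanalytic maps rather than relying on your final sketch, which is inaccurate: Sard controls critical values, not critical points (the nowhere-dense bad set in the source is where the rank fails to be locally constant), and making each $\Phi(S_j)$ exactly a stratum needs an inductive refinement of both target and source, not just removal of bad loci.
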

 It was derived from this proposition that push-forwards of globally subanalytic measures always have a globally subanalytic support and that their density cannot be a flat function (see \cite[Proposition $2.5$]{platrub}).

 \subsection{Sobolev spaces of globally subanalytic sets.}  We denote by	$|x|$ the    euclidean norm of $x\in \R^n$ and  by	$\adh A$ the closure of a set $A\subset \R^n$. 
 
 Let $\mu$ be a globally subanalytic measure or the push-forward of such a measure on a bounded open globally subanalytic subset $\omd$ of $\R^n$.  We denote by  $X\subset \omd$ the support of $ \mu$.
 
 	Given   $p\in [1,\infty)$, we denote by $L^p_\mu(\omd)$ the Lebesgue space relative to $\mu$, and we set for $k\in \N$
 $$\cc^{k,p}_\mu(\omd):= \{u\in \cc^{k}(\omd) :  \forall \alpha\in \N^n \mbox{ satisfying } |\alpha| \le k,  \frac{\pa^{|\alpha|}u}{\pa x^\alpha} \in L^p_\mu(\omd)\}.$$
 We then denote by  $W^{k,p}_\mu(\omd)$ the  completion of   $\cc^{k,p}_\mu(\omd)$ equipped with
 $$ ||u||_{W^{k,p}_\mu(\omd) }:= \sum_{i\le k} || D^i u||_{L^p _\mu (\omd)}, \qquad \mbox{ where }\quad D^i u:= \left(\frac{\pa^{|\alpha|}u}{\pa x^\alpha} \right)_{|\alpha| =i},$$
 identifying functions whose derivatives of order $\le k$ match $\mu$-almost everywhere.

 If $x$ and $y$ are two points of the same connected component of   $X$, we set
 \begin{equation}\label{eq_inner_dist}
 	d_X(x,y):=\inf\{l(\gamma)\,:\, \gamma:[0,1]\to X\mbox{ continuous },\;\; \gamma(0)=x, \gamma(1)=y\},
 \end{equation} where $l(\gamma)$ denotes the length of $\gamma$. %(defined as $\sup \sum_{i=0} ^{k-1} |\gamma(t_i)-\gamma(t_{i+1})|$, where the supremum is taken over all the partitions $0=t_0<\dots<t_k=1$).
 When $x$ and $y$ do not belong to the same connected component, we put $d_X(x,y):=\infty$. We call $d_X(\cdot,\cdot)$ the {\bf inner metric of $X$}.

 We say that  $f:X\to\R$ is {\bf Lipschitz with respect to the inner metric} if there exists a constant $C$ such that the following inequality holds for any $x,y\in X$
 \begin{equation}\label{eq_inner_lips}|f(x)-f(y)|\le Cd_X(x,y).\end{equation}
 We denote by   $\check{\cc}^{0,1}(X)$ the space of the functions on $X$ that are Lipschitz with respect to the inner metric of $X$.	 We endow this space with its natural norm \begin{equation}\label{eq_co1_norme}
 	||u||_{\check{\cc}^{0,1}(X)}:=\sup_X |u(x)|+ \sup_{X\times X, x\ne y} \frac{|u(x)-u(y)|}{d_X(x,y)}.                               \end{equation}
 
 \begin{rem}\label{rem_inner}In \cite{livre}, the infimum (\ref{eq_inner_dist}) is taken on the $\cc^0$ subanalytic arcs only but this makes no difference \cite{lipsapprox}.
 	Moreover, if $E$ is a nowhere dense globally subanalytic subset of $X$ then it is enough to take this infimum   on the $\cc^0$  subanalytic arcs $\gamma$ that cut $E$ at finitely many points only (possibly $0$). This is because we can  approximate $\cc^0$  globally subanalytic arcs of $X$ by $\cc^0$  arcs (of arbitrary close length) meeting $  E$ in finitely many points. As a matter of fact, if $u\in\check{\cc}^{0,1}(X\setminus E) $ extends continuously on $E$ then the extension is inner Lipschitz on $X$.
% 	\begin{proof}  Indeed, if $E$ is nowhere dense, for every $x\in E$ and there is $a_x$ such that each for $s\in (0,a_x)$ there is $y\in X\setminus E$ such that $|x-y| =s$.   By Definable Choice, this means that there is a globally subanalytic mapping $w(x,s)$ such that  $|w(x,s) -x|=s$. It follows from  Puiseux Lemma with parameters \cite[Proposition]{livre} that there is  a partition of $E$ into globally subanalytic sets such that $w$ is $\cc^1$ on each element of this partition and since $\gamma$ is subanalytic, we can find $t_0=0 <\dots <1=t_k$ such that $\gamma(t)$ stays in one element of this partition when $t\in (t_i,t_{i+1}), i<k$. Then the arc    $\tilde\gamma(t):=w(\gamma(t) , s(t)) $, with $s(t)$ small positive $\cc^1$ function satisfying $s(t_i)=s(t_{i+1})=0$ only cuts   $E$ at $s(t_i)$   
% 	\end{proof}
 	  
%$$ 	d_X(x,y):=\inf\{l(\gamma)\,:\, \gamma:[0,1]\to X\mbox{ continuous },\;\; \gamma(0)=x, \gamma(1)=y, \gamma([0,1)\cap E  \mbox{ finite}\},$$
 	\end{rem}
 
  Theorem \ref{thm_mor} ensures that for $k$ large enough, every $u \in W^{k,2}_\mu (\omd)$ must coincide  almost everywhere with a function which is  Lipschitz with respect to the inner metric. In the sequel, we will always assume it to be equal   on  $X$ to such a function.
 Embedding (\ref{eq_embed}) amounts to say that  there is a constant $C$ such that we have for $k\ge 1$ large enough for   $u\in 	W_ { \mu}^{k,2}(\omd)$:
 \begin{equation}\label{eq_cont_mor}
 	||u||_{\check{\cc}^{0,1}(X)} \le C 	||u||_{W_ { \mu}^{k,2}(\omd)}.
 \end{equation}
 
 \begin{rem}\label{rem_compact} 
This embedding is induced by the embedding  given by Corollary $6.3$ of \cite{lprime} (see the proofs of Theorem $3.4$ and $3.5$ of \cite{platrub} for details). As the latter Corollary provides an embedding which is compact in the $L^\infty$ norm, so does Theorem \ref{thm_mor}. Section \ref{sect_cpct} gives details on this issue. \end{rem}

\subsection{A Poincar\'e inequality.}
If   $\{a_1,\dots, a_l\}$ is a finite subset  of $X$ meeting every connected component, we then have for   $u\in W^{k,2}_\mu(\omd)$ (with $k$ as in (\ref{eq_cont_mor})) 
\begin{equation}\label{eq_poinc}
	||u||_{L_ { \mu}^{2}(\omd)} \le C \sum_{1\le j\le l} |u(a_j)|	+ C	\sum_{1\le i\le k} || D^i u||_{L^2 _\mu (\omd)},
\end{equation}
for some constant $C$. 
This inequality actually follows from  the compactness of embedding (\ref{eq_embed})  (see Remark \ref{rem_compact} or Section \ref{sect_cpct}) up to a classical argument. Namely, if it failed, there would be a unit  sequence $u_m$ in $L_ { \mu}^{2}(\omd) $  for which the right-hand-side goes to zero. The compactness of the embedding would then ensure us that we can extract  a uniformly convergent subsequence. This subsequence would tend to a function $u$ that has $0$ derivative as distribution. By Proposition $2.5$ of \cite{platrub},  there is a $\mu$-negligible subset $B$ of $X$ such that $X\setminus B$ can be split into manifolds on which $\mu$ is absolutely continuous with respect to the canonical measure of the manifold. The function $u$ must thus be locally constant on these manifolds, which, since $\sum_{j=1}^l |u_m(a_j)|$ tends to zero and the $a_j$'s meet each connected component,  would contradict the fact that $u_m$ is a unit sequence in   $L_ { \mu}^{2}(\omd) $. 

Inequalities (\ref{eq_cont_mor}) and (\ref{eq_poinc}) will ensure well-posedness of our ridge problem in the Sobolev space. The latter inequality makes it possible to get rid of the $L^2_\mu$ norm in the penalty of the loss (see (\ref{eq_ridge_theoretical})). It is more convenient to work with the second term of the right-hand-side of (\ref{eq_poinc}) (called by the author the ``Sobolev energy'') than with the Sobolev norm because it does not penalize the constant function. It seems to be particularly relevant for classification problems, where the target is locally constant and thus has Sobolev energy $0$. 

 We end this section with   remarks about our setting that will not be needed for the next sections.
 \begin{rem}\label{rem_definissabilite}
 	 	The above results remain valid  with minor modifications in the proofs in the larger framework of polynomially bounded o-minimal structures \cite{cos,vdd}.
 	\end{rem}

 \begin{rem} In    (\ref{eq_cont_mor}) and (\ref{eq_poinc}), the function $u$ is defined on a neighborhood $\omd$ of $X$. However, only the values of the partial derivatives of $u$ on $X$ are relevant, and these two inequalities are valid for any $u$ defined on a (possibly non subanalytic) open neighborhood $V$ of $X$ in $\omd$, with constants independent of $V$. This is because  the restriction map $W^{k,p}_\mu(\omd)\to W^{k,p}_{\mu_{|V}} (V)$ is an onto isometry. This means that if we define $W^{k,p}_\mu(X)$ as the direct limit of $W^{k,p}_{\mu_{|V}}(V)$ when $V$ runs in all the open neighborhoods of $X$ in $\omd$, these two inequalities remain valid in this space. We thus could forget $\omd$, simply fixing a locally closed bounded subanalytic set $X$ and working with functions defined on any neighborhood of this set.  \end{rem}
 
 \begin{rem}\label{rem_equivalence} Two measures $\mu$ and $\nu$ are {\bf bi-Lipschitz equivalent} if there is a bi-Lipschitz homeomorphism $h:V\to \omd$, with $V$ and $\omd$ neighborhoods of the support of $\nu$ and $\mu$ respectively,  such that  $\frac{h_*\nu}{C}\le \mu \le Ch_*\nu$ for some $C>0$. It seems that the proof of Corollary 3.5 of \cite{platrub} applies to every measure  with bounded support which is bi-Lipschitz equivalent to a push-forward of a globally subanalytic measure on an open subset of $\R^n$, and hence that (\ref{eq_poinc}) is valid for such a measure.
  	 It indeed seems to be enough to have such a homeomorphism locally at each point of  the closure of the support in $\R^n$ (we can use a partition of unity).
 	In particular, our method apparently applies to every measure with a density bounded away from zero and infinity on a bounded   Lipschitz domain or   a compact Lipschitz submanifold (possibly with boundary\footnote{By a Lipschitz submanifold with boundary of $\R^n$ we mean a set which is locally, up to a linear change of coordinates of $\R^n$, the graph of a Lipschitz map over a Lipschitz half-space.}) of $\R^n$, which is a very common framework. 
%Inequalities (\ref{eq_cont_mor}) and (\ref{eq_poinc})  actually hold for any  measure  on a bounded open set which is bi-Lipschitz equivalent  to a push-forward of  globally subanalytic measure. Indeed, the proof of Theorem \ref{thm_mor} first splits the considered measure  into a finite sum $\sum_{M} \fbf_M\cdot \mu_M$ (see Proposition \ref{pro_push}) and then focuses on the spaces $W^{1,p}_{\fbf_M\cdot \mu_M}(M):=\{u\in L^p_{\fbf_M\cdot \mu_M}(M): |\nabla u | \in L^p_{\fbf_M\cdot \mu_M}(M)\}$.
%If $h: (V,\nu)\to (\omd,\mu)$ is a bi-Lipschitz equivalence with $\omd$ bounded subanalytic open set and $\mu$ push-forward of some globally subanalytic measure, then $(u\circ h^{-1})_{|M}\in W^{1,2}_{\fbf_M\cdot\mu_M}(M)$ for every $u\in\cc^{k,2}_\nu(V)$, $k\ge 1$. We thus can  apply Theorem 3.3 of \cite{platrub}  to this function, as well as to $(\frac{\pa^{|\alpha|}u}{\pa x^\alpha}\circ h^{-1})_{|M}\in W^{1,2}_{\fbf_M\cdot\mu_M}(M)$ for all $|\alpha|\le (k-1)$ and $M$, to conclude that $u\circ h^{-1}\in\check\cc^{0,1}(X)$ ($X$ being the support of $\mu$), hence $u\in\check\cc^{0,1}(h^{-1}(X))$, and that $ W^{k,2}_{\nu}(V)\hookrightarrow \check\cc^{0,1}(h^{-1}(X))$, for $k$ large enough (see the proof of \cite[Corollary 3.3]{platrub} for details).   Inequality (\ref{eq_poinc}) then follows by the same argument. Since we can use a finite partition of unity, it is enough to have such a homeomorphism locally at each point of $\adh V$ (assuming $V$ bounded).
%holds for $\nu$ with a constant independent of $u$, depending on the bi-Lipschitz constants of $h$ and on the above $C$.
\end{rem}
 
 	\begin{rem} 
 		If $E$ is a nowhere dense globally subanalytic  subset of $X$ then, applying 
 		  (\ref{eq_cont_mor}) to $\omd\setminus \adh E$   endowed with the restriction of $\mu$, we get for each $u\in W^{k,2}_\mu(\omd\setminus \adh E)$ (for $k$ large enough, for some $C>0$ independent of $u$) %that extends continuously at points of $\adh E$
$$	||u||_{\check{\cc}^{0,1}(X\setminus \adh E)} \le C 	||u||_{W_ { \mu}^{k,2}(\omd\setminus \adh E)}.$$ %	which is exactly (\ref{eq_cont_mor}) if $E$ is $\mu$-negligible.	  
When  $u$ extends continuously at points of $\adh E$,  we must have $	||u||_{\check{\cc}^{0,1}(X\setminus \adh E)} =			||u||_{\check{\cc}^{0,1}(X)} $ (see Remark \ref{rem_inner}).   This means that  (\ref{eq_cont_mor}) (and hence (\ref{eq_poinc})) holds for continuous functions that have singularities on a nowhere dense globally subanalytic set and  we can get rid of the singularities of $u$ when integrating. 
 	  This situation arises for instance when $u$ is the absolute value of a polynomial or more generally any piecewise polynomial continuous function on a globally subanalytic partition. 		
 		 We wish to stress the fact that the constants, and possibly $k$, however then depend on this subset $E$, since the partition  associated to the restriction of the measure (see Definition \ref{dfn_sub_meas} or Proposition \ref{pro_push}) must now be adapted to $X\setminus \adh E$.  For example, if $E_t$ denotes  the boundary of the ball $\bou(0_{\R^5},t)$ and $u_t(x)=\max (1-\frac{|x|^2}{t^2},0)$ then $||u_t||_{W^{k,2}_\mu((-1,1)^5\setminus E_t)}$ tends to $0$   as $t\to 0$ (for all $k$, say that $\mu$ is the Lebesgue measure) but $u_t$ does not tend to $0$ uniformly, although this function is smooth outside $E_t$ and continuous for each $t$.

% 	\end{enumerate}
 \end{rem}
 \subsection{On asymptotical values of elements of $\check{\cc}^{0,1}(X)$}\label{sect_cpct}  This section  is   not needed for the next sections. Elements of $\check{\cc}^{0,1}(X) $    do not necessarily extend continuously at frontier points of $  X$ ($X$ is closed in $\omd$ but may have accumulation points in $\adh\omd\setminus \omd$). We clarify this in this section, which will be the opportunity to account for the  compactness of our Morrey's embedding explained in Remark \ref{rem_compact}, as well as to point out that our method makes it possible to produce discontinuous heuristics (see example \ref{exa_disc}).

 Let $k$ be sufficiently big for   inequality (\ref{eq_cont_mor}) to hold.  We denote by  $\bou(x,\ep)$ the euclidean ball of radius $\ep$.
 We say that $X$ is {\bf connected at $\xo\in\adh X\setminus X $} if  $\bou(\xo,\ep)\cap X$ is connected for all  $\ep>0$ small enough.  If $X$ is connected at a frontier point $\xo$ then an inner Lipschitz function can have only one asymptotical value at $\xo$ (this easily follows from the Lipschitz conic structure theorem \cite[Theorem $3.4.1$]{livre}).  When the germ of $X$ at $\xo$ has more than one connected component (there must be finitely many), there then may be several asymptotical values at $\xo$, one per connected component (since we can argue on each component).

We say that $X$ is {\bf normal} if it is connected at each $x\in  \adh X\setminus X$. By the above paragraph, if $X$ is normal then $\check{\cc}^{0,1}(X)=\check{\cc}^{0,1}(\adh X)$ (see Remark \ref{rem_inner}). Since $\adh X$ is compact for $d_{\adh X}$, it then follows from Arzela-Ascoli's Theorem  that  every bounded sequence of $\check{\cc}^{0,1}(X)$ has a uniformly convergent subsequence.

When $X$ is not normal, one can always divide it into finitely many normal sets. It suffices to take a triangulation of $\adh X$ compatible with $X$. Applying then the above argument to every element of this partition, we get that every bounded sequence of $\check{\cc}^{0,1}(X)$ has a uniformly convergent subsequence, from which we can conclude that embedding (\ref{eq_embed})   is compact in the $L^\infty$ norm. 

%
%{\bf A $\cc^\infty$ normalization of a globally subanalytic submanifold $M$ of $ \R^n$ }
%is  a globally subanalytic $\cc^\infty$ diffeomorphism $h: \check{M}\to M$ satisfying $\sup_{x\in \check{M}} |D_x h|<\infty$ and  $\sup_{x\in M} |D_x h^{-1}|<\infty$,  with $\check{M}$ normal $\cc^\infty$ submanifold of $\R^m$, for some $m$. We  sometimes say that $\check{M}$ is a normalization of $M$.
%Existence of normalizations was established in \cite[Proposition $3.4$]{trace}.
%
%The sets $X$ and $\omd$ being globally subanalytic, $X$ admits a neighborhood $U$ in $\omd$ that  retracts by deformation on it (this follows from existence of triangulations). A normalization of $U$, say $h_U:\check{U} \to U$, thus provides a mapping $h:h_U^{-1}(X) \to  X$ with $\check{X}:=h^{-1}_U(X)$ normal.  
%
%
%
%As $h_U$ and $h^{-1}_U$ both have bounded derivative, $h$ is bi-Lipschitz with respect to the inner metric.  Hence, $h$ identifies $\check{\cc}^{0,1}(X)$ with $  \check{\cc}^{0,1}(\check{X})$, which, since $\check{X}$ is normal, is equal to $\check{\cc}^{0,1}(\adh{\check{X}})$. As $\adh{\check{X}}$ is compact,   thanks to Arzela-Ascoli's Theorem, we can conclude that (\ref{eq_embed}) actually yields a compact embedding. 

The argument we just explained  is indeed essentially the argument of Remark \ref{rem_compact} since the compactness of the embedding  given by Corollary $6.3$ of \cite{lprime}  is  obtained by a normalization of the manifold. 
\begin{exa}\label{exa_disc}
	Let $\omd =(-1,1)^3$ and let $X$ be the cone $$\{ (x,y,z)\in (-1,1)^3:  x^2+y^2=z^2 \}.$$
The set $X$ being normal, any element of $\check{\cc}^{0,1}(X)$ extends to $\adh X$.  This means that so do all the
 elements of $W^{k,2}_\mu(\omd)$ (for $k$ large enough, say for instance that $\mu$ is the Hausdorff measure on $X$ of dimension $\dim X$).  If now $\omd'=\{ (x,y,z)\in (-1,1)^3:  x^2+y^2<2z^2 \} $ and $X':= X\setminus \{0\}$ then the elements of $W^{k,2}_\mu(\omd')$ may have two asymptotical values at the origin. This  makes it possible to produce discontinuous heuristics, provided we include discontinuous functions in the basis $f_1,\dots,f_b$ (appearing in the next section), which  requires to   know in advance at which points the target function fails to be continuous.   %This also demands the support of the measure to have several connected components at those points.
	\end{exa}

 \section{Our greedy algorithm}\label{sect_algo}
 We are in the same setting as in the introduction. Namely, let $\omd$ be  an open bounded  
 globally subanalytic subset of $\R^n$ and let $\mu$ be a globally subanalytic probability measure on $\omd$ or a push-forward of such a measure under a globally subanalytic mapping. Let $\xca:=\{x_i\}_{i\in \N}$ be a sequence of observations drawn independently with respect to $\mu$ and  let $X$ denote the support of $\mu$. This set is with probability $1$ the closure of $\xca$ in $\omd$ (strictly speaking, the closure of  $\{a\in \R^n: \exists \, i\in \N, x_i=a\}$). Take an inner Lipschitz function $f:X\to \R$, and assume $x_1,\dots,  x_\ell$ to be respectively labeled by $y_1:=f(x_1),\dots,y_\ell:=f(x_\ell)$,  $\ell\in \N$ fixed integer.
 
 As often in machine learning, we will not strive to learn $f$ but a close function within our basis.  Our aim will be to find an element of the Sobolev space $u$ that approximates satisfyingly $f$.
  We emphasize that we do not assume $f$ to be in the Sobolev space.   
   We will be looking for a function $u$ in $ W^{k,2}_\mu(\omd)$, with $k$ sufficiently large for (\ref{eq_cont_mor}) to hold, satisfying  $|u(x_{i_j})-f(x_{i_j})|<\ep$ for all $j\le l$ ($\ep$ small positive number), where  $x_{i_1},\dots, x_{i_l}$ is a subset of the labeled data sufficiently rich to contain at least one point in each connected component of $X$ (the algorithm will  rather work with the MSE train, this is just to motivate our method).  In this situation, we have for all $x\in X$ 
 $$|u(x)-u(x^*)|\le C ||u||_{W_\mu^{k,2}(\omd)} \min_{j\le  l}  d_X (x,x_{i_j}) \et |f(x)-f(x^*)|\le L_f\,\min_{j\le  l}  d_X (x,x_{i_j}) ,$$
 where $L_f$ is the Lipschitz constant of $f$, $C$ the constant of (\ref{eq_cont_mor}), and $x^*$ the point realizing    $ \min_{j\le  l}  d_X (x,x_{i_j} )$. Summing these three inequalities, we get an upper bound of the error:
 \begin{equation}\label{eq_error_f_u}|f(x)-u(x)|\le\ep +(L_f+C ||u||_{W_\mu^{k,2}(\omd)})\min_{j\le  l}  d_X (x,x_{i_j}) .\end{equation}
On a subspace of finite dimension $H$, (\ref{eq_cont_mor}) always holds, but with a constant depending on $H$. Since the greedy procedure browses a large number of candidate spaces, what matters here is a constant independent of them, which is what (\ref{eq_cont_mor}) provides.
 
The Lipschitzness of $f$ could of course be relaxed into another notion of uniform continuity property. Existence of such a function $u$ is here an assumption but the algorithm will on the way check the feasibility by computing the MSE train of the output and providing bounds for its Sobolev energy.
 
  We will construct our approximations by solving a quadratic problem involving the MSE train and a penalty that will be given by a weighted Sobolev semi-norm. These weights will play the role of our regularization parameters. Take   some positive reals  $(\wbf_\alpha)_{1\le |\alpha|\le k} $, with $k$ as above and $\alpha=(\alpha_1,\dots, \alpha_n)$, and set for $u$ and $v$ in $W^{k,2}_\mu(\omd)$
  \begin{equation}\label{eq_inner_weighted_theoretical}
 	<u,v>_\wbf:= \sum_{1\le|\alpha|\le k}\wbf_\alpha\int_\omd  \frac{\pa ^{|\alpha|} u}{\pa x^\alpha}(x)\cdot \frac{\pa ^{|\alpha|} v}{\pa x^\alpha}(x) d\mu(x). \end{equation}
 Given a subspace $H$ of $W^{k,2}_\mu(\omd)$, consider now the ridge problem
 \begin{equation}\label{eq_ridge_theoretical}
 	\underset{u\in H}{\mbox{ argmin}} \;\frac{1}{\ell}\sum_{1\le i\le \ell}|y_i-u(x_i)|^2+||u||_\wbf^2,
 	 \end{equation}
  where $||u||_\wbf$ is the semi-norm resulting from the just above inner product. Inequality (\ref{eq_poinc})  ensures that this problem is well-posed for any closed subspace $H$ as soon as the $x_i$ meet every connected component of $X$.
%  
%  They are here to improve the minimizer, giving in our experiments less importance to higher derivatives, which with Gaussians may grow exponentially.  

  The subspace $H$ will be chosen as the span of some functions $f_1,\dots, f_b$ in $W^{k,2}_\mu(\omd)$.
  In the ideal situation where we are able to estimate inner products  $<f_i,f_j>_\wbf$,
 we can compute the Gram matrix $G$ in this basis and solve the above ridge problem. 
  This is unfortunately almost always impossible and we will rather estimate the inner product by averaging 
 the partial derivatives on  $N$ unlabeled data drawn independently of all the other choices (and independently of each other). Namely, let $\mathcal{Z}=(z_1,\dots,z_N)$ be $N$ random points drawn independently in $\xca$, and  define
 \begin{equation}\label{eq_<>_Z}<u,v>_{\mathcal{Z}}:= \frac{1}{N}\sum_{1\le |\alpha|\le k} \wbf_\alpha  \sum_{i=1}^N\frac{\pa ^{|\alpha|}u}{\pa x^\alpha}(z_i)\cdot \frac{\pa ^{|\alpha|}v}{\pa x^\alpha}(z_i). \end{equation} 
  This gives rise to an empirical norm $|| u||_{\mathcal{Z}}$ and an empirical loss
 \begin{equation}\label{eq_empirical_loss}
 	L_\mathcal{Z}(u):=	 \sum_{1\le i\le \ell} \frac{|y_i-u(x_i)|^2}{\ell}+	 ||u||_\zca^2.
 \end{equation}
In order to avoid  an ill-conditioned problem, we will restrict ourselves to a spectral subspace $E_{\zca,\tau}\subset \R^b$ spanned by all the eigenvectors corresponding to an eigenvalue  bigger than some threshold $\tau>0$ (fixed once for all at the beginning) of the matrix 
%\begin{equation}
$M:=	\frac{1}{\ell}A^\mathbf{t}A + G_\zca,$
%\end{equation} 
where $A$ is the matrix of the linear mapping $H \ni u \mapsto (u(x_1), \dots, u(x_\ell ))$ in the basis $f_1,\dots,f_b$ and $G_\zca$  denotes the  Gram matrix  of $<\cdot, \cdot>_\zca$ in this basis.  We shall therefore solve
\begin{equation}\label{eq_mz_Z_tau}
	\underset{\omega\in E_{\zca,\tau}}{\mbox{argmin}} \;\;\frac{1}{\ell} |\ybf-A\omega|^2 +\omega^{\mathbf{t}}  G_\zca\omega .
\end{equation} 
%Denote by $\omega$ the solution of this quadratic problem and by $u$ the corresponding element of $H$, i.e. $u=\sum \omega_i f_i$. 

 The error of our Monte Carlo approximation can be bounded with high probability  using Hoeffding's  inequality, which  gives for all $u\in H$ an estimate of type (see  (\ref{eq_coeffs}))
 \begin{equation}\label{eq_etat}
 	||u||_\wbf ^2\le ||u||_\zca^2+ \eta,
 \end{equation}
 for some $\eta$ depending on $\ybf, \wbf, N,\omega$, and the $f_i$. 
 McDiarmid's inequality  can be used to provide other inequalities of the same type (see (\ref{eq_sph}) and Remark \ref{rem_B}), that may be more or less sharp, depending on the situation. We can of course take the smallest $\eta$ of inequalities (\ref{eq_coeffs}) and  (\ref{eq_sph}) with probability at least $(1-2p)$, for $p\in (0,1)$ fixed (small). 
 
  Notice that  if $\omega\in \R^b$  is the minimizer of (\ref{eq_mz_Z_tau}) and if  $u=\sum \omega_i f_i$    then, because $ L_\zca(u)\le L_\zca(0)$, we must have
  $$  ||u||^2_\zca \le\frac{1}{\ell}  |\ybf|^2 -\frac{1}{\ell} |\ybf-A\omega|^2, $$
  and consequently, by (\ref{eq_etat}),  which is true for all $\omega$,
  \begin{equation}\label{eq_energy_minimizer}
  	||u||^2_\wbf \le\frac{1}{\ell}  |\ybf|^2 -\frac{1}{\ell} |\ybf-A\omega|^2 +\eta,
  \end{equation}
 which means that the Sobolev energy  is, up to $\eta$, bounded by the fall of the MSE train.
 
   The number of functions of the basis is generally limited by the cost of the computation of $G_\zca$, which  makes it difficult to approximate an arbitrary target. We therefore opted for a Greedy approach. Our algorithm can be described as follows:

\noindent   \begin{tcolorbox}[
	colback=white,
	colframe=black!70,
	boxrule=0.6pt,
	arc=2mm,
	left=3mm,
	right=3mm,
	top=2mm,
	bottom=2mm,
	title={Multiscale Greedy},
	fonttitle=\bfseries
	]
	Choose a basis of functions $f_1,\dots,f_b$  \\
	Draw randomly   $z_1,\dots, z_N $ in $\xca$\\
	Compute $<f_i,f_j>_\zca$ for all $i\le j$, where $\zca=(z_1,\dots, z_N)$\\    
	Find the solution $\omega$ of Problem (\ref{eq_mz_Z_tau})\\
	Compute the residual $\mathbf{r}:=\ybf -A\omega$, where $A$ is as in (\ref{eq_mz_Z_tau}), and iterate the method, with $\ybf:=\mathbf{r}$ and a new basis of functions. 	
\end{tcolorbox}
  
The  $z_i$ must be drawn independently of all the other choices. As explained in (\ref{eq_etat}), the theorems of Section \ref{sect_stat} provide at each level of the greedy a bound for the weighted Sobolev norm of the minimizer $u=\sum \omega_i f_i$. To make it more explicit, let $\mathbf{r}_l$ denote the residual at the level $l\ge 1$ (with $ \mathbf{r}_0:=\ybf$) and $u_l=\sum_{i=1}^b \omega_{l,i}f_{l,i}$ the function corresponding to the minimizer, and observe that (\ref{eq_energy_minimizer}) reads for $l\ge 1$:
$$||u_l||_\wbf^2 \le \frac{1}{\ell} |\mathbf{r}_{l-1}|^2-\frac{1}{\ell} | \mathbf{r}_l|^2 +\eta_l,$$
for some $\eta_l$ that can be bounded with high probability as explained in (\ref{eq_coeffs}) and (\ref{eq_sph}).  After summing over all the accomplished levels, we get with probability at least $(1-2lp)$:
%$$   \sum_{i=1}^l ||u_i||_\wbf^2 \le\frac{1}{\ell}|\ybf|^2 +\sum_{i=1}^l \eta_i.$$ \le \sqrt{l}\,\Big(\tfrac1\ell|\ybf|^2+\sum_{i=1}^l\eta_i\Big)^{1/2}
\begin{equation}\label{eq_aposteriori} ||\sum_{i=1}^l u_i||_\wbf \le \sum_{i=1}^l ||u_i||_\wbf \le \sum_{i=1}^l \Big(\frac{1}{\ell}|\mathbf{r}_{i-1}|^2-\frac{1}{\ell}|\mathbf{r}_i|^2+\eta_i\Big)^{1/2}.\end{equation}
Let us stress the fact that $\sum_{i=1}^l \Big(\frac{1}{\ell}|\mathbf{r}_{i-1}|^2-\frac{1}{\ell}|\mathbf{r}_i|^2\Big)\le\frac{|\ybf|^2}{\ell}$.
  The  $\eta_i$ thus turn out to be bounds for the total weighted Sobolev energy of the heuristics rather than limitations of the Monte Carlo approximation of the Gram, which means that keeping reasonable values of the $\eta_i$ is essential.  Inequalities   (\ref{eq_coeffs}) and (\ref{eq_sph}) provide effective values of the $\eta_i$ in terms of the hyperparameters, the minimizer $\omega$, and the probability $p$. It is therefore possible to provide, after each level of the greedy   algorithm, an a posteriori bound for the weighted Sobolev energy of the produced heuristic valid with high probability.

 In view of (\ref{eq_error_f_u}) and (\ref{eq_aposteriori}), attention should be taken to prevent the $\eta_i$ from exploding. Of course, they depend on the choice of the successive bases $f_{l,i}$.  Their   ability to adapt to the residuals (to which the $\sqrt{\eta_i}$ are proportional)  is thus essential for the success of the algorithm. A natural choice is to start with Gaussians with large radii and, at each level, to make the radius decay by a factor $q$. The first levels approximate the target roughly by rather mild functions, capturing its global behavior at low cost, and the next levels focus on more local oscillations. As the constants $\eta_i$ given by  Section \ref{sect_stat} will increase as the radius $\sigma$ of the Gaussians drops,  it is essential that the norm of the residual decays fast. We thus selected our basis among a large set of functions, a dictionary, taking those that are the most correlated with the residual. It seems that improvements could be made on the way this selection is made and that  the residual is a good indicator for this purpose. The  $\eta_i$ can be helpful  to ensure that a reasonable approximation of the target lies within the limit of our sequence of dictionaries and could make it possible to adapt  them to the target.

 The $\eta_i$ were computed on examples (targeting a probability $(1-p)$ with $p\le 10^{-3}$, see Section \ref{sec_tests}) and 
       stayed contained in the last levels by the rapid decay of the residual. It seems that many other statistical bounds can be provided and this is another interesting issue. The actual value of the Sobolev energy was also assessed empirically (on the total cloud) and the energy, as well as its empirical estimate, were much smaller than $\eta_i$, which means that there is space for improvements of the statistical bounds, for instance by taking more care of the overlaps of the bumps of the elements of the basis. 
       % On the last levels, the bumps of the Gaussians are separated from each other and the energy can 

\subsection{Directional derivatives.}\label{sect_dir}  The number of partial derivatives of order $ d$ involved in the computation of the Gram matrix equals  $\binom{n+d-1}{n-1}$, which is a serious problem  when the dimension of the ambient space increases, the computation of this matrix being the most expensive part of the algorithm.   With MNIST data downsampled into   $7\times7$, this was intractable for us. We thus used directional derivatives $$\frac{\pa^i u}{\pa w^i}(x):=D^i_x u(w,\dots,w)$$ to compute $G_\zca$, averaging at each  $z_i$ over a bunch  $\mathcal{B}_i$ of $m$ random vectors of $\sph^{n-1}$, i.e. we replaced the bilinear form $<\cdot,\cdot>_\zca$ introduced in (\ref{eq_<>_Z})  with 
  $$<u,v>_{\mathcal{Z}}:= \sum_{j=1}^k\, \frac{\wbf_j}{N} \, \sum_{i=1}^N\, \frac{1}{m}\,\sum_{w\in \mathcal{B}_i}\,\frac{\pa ^{j}u}{\pa w^j}(z_i)\cdot \frac{\pa ^{j }v}{\pa w^j}(z_i), $$
   for some weights $(\wbf_j)_{1\le j \le k}$.
   This amounts  to replace the bi-linear form (\ref{eq_inner_weighted_theoretical})   with 
    $$<u,v>_{\wbf}:= \sum_{j=1}^k  \wbf_j \int_\omd \int_{\sph^{n-1}} D_x^j u(w)\cdot D ^j _xv(w)\,dw\,d\mu(x),$$
  which  gives rise to an equivalent semi-norm. This proved efficient and the algorithm worked with a rather small number of directions ($50$ in dimension $49$), and increasing the number of directions did not induce significant differences in the results. Somehow, we here take advantage of the Monte Carlo architecture  which seems to compensate the just above Monte Carlo approximation.  It would be interesting to also give statistical bounds for empirical estimates based on directional derivatives.
  
  \subsection{Statistical bounds for the Gram estimate.}\label{sect_stat}
   In this section, we provide inequalities to bound the Sobolev energy in terms of the hyperparameters and our Monte Carlo estimate. We let $f_1,\dots, f_b$, $\omd, \mu, X, \wbf_\alpha,G,\zca,G_\zca, ||\cdot||_\zca$, and $\ybf$ be as above. We assume that the $f_i$ are $\cc^k$ and have bounded derivatives  up to order $k$  on $X$.   Given a matrix $U$, we denote by $||U||$ the norm provided by the supremum on the sphere  (w.r.t. the euclidean norm) and the $L^\infty$ norm of matrix valued mappings will be assumed to be taken with respect to this matrix norm.
   \begin{thm}\label{thm_coefs}
   	Let $p\in (0,1)$.	  With probability at least $(1-p)$, we have for every   $u=\sum \omega_i f_i$, $\omega\in \R^b$,
   	\begin{equation}\label{eq_coeffs}||u||_\wbf^2 \le ||u||^2_\zca+\Big(\sum_{j=1}^b|\omega_j|\cdot d_j\Big)^2\sqrt{\frac{2\ln(b(b+1)/p)}{N}}, \end{equation}
   	where $d_j=||\sum_{\alpha}\wbf_\alpha   \frac{\pa ^{|\alpha|}f_j}{\pa x^\alpha}(x)^2||_{L^\infty_\mu(\omd)}^{1/2}$.
   \end{thm}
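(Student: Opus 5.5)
The plan is to reduce the uniform-in-$\omega$ statement to finitely many scalar concentration events, one per entry of the Gram matrix, and then apply a union bound. Write $G=(G_{ij})$ with $G_{ij}=<f_i,f_j>_\wbf$, and $G_\zca$ for its empirical counterpart. For $u=\sum\omega_i f_i$ we have
$$||u||_\wbf^2-||u||_\zca^2=\sum_{i,j}\omega_i\omega_j\,(G_{ij}-(G_\zca)_{ij}).$$
It therefore suffices to show that, with probability at least $1-p$, for all $i\le j$,
$$|G_{ij}-(G_\zca)_{ij}|\le d_id_j\,t,\qquad t:=\sqrt{\frac{2\ln(b(b+1)/p)}{N}}.$$
Summing then gives $\sum_{i,j}|\omega_i||\omega_j|d_id_j\,t=(\sum_j|\omega_j|d_j)^2t$, which is exactly (\ref{eq_coeffs}). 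Since the events concern only the Gram entries and not $\omega$, the bound holds simultaneously for every $\omega\in\R^b$.

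For a fixed pair $(i,j)$, set $g_{ij}(x):=\sum_\alpha\wbf_\alpha\frac{\pa^{|\alpha|}f_i}{\pa x^\alpha}(x)\frac{\pa^{|\alpha|}f_j}{\pa x^\alpha}(x)$. Then $(G_\zca)_{ij}=\frac1N\sum_{l=1}^N g_{ij}(z_l)$, where the $z_l$ are i.i.d. with law $\mu$. This holds because the $z_l$ are drawn independently in $\xca$ and the $x_i$ are themselves i.i.d. with law $\mu$; I will take the sampling to be of fresh indices, so that the $z_l$ are i.i.d. $\mu$. The expectation of $g_{ij}(z_l)$ is $G_{ij}$. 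By Cauchy–Schwarz in the weighted sum over $\alpha$ (the weights are positive), $|g_{ij}(x)|\le \big(\sum_\alpha\wbf_\alpha(\pa^\alpha f_i)^2\big)^{1/2}\big(\sum_\alpha\wbf_\alpha(\pa^\alpha f_j)^2\big)^{1/2}\le d_id_j$ for $\mu$-a.e. $x$. The summands thus take values in an interval of length $2d_id_j$, and Hoeffding's inequality gives
$$P\big(|(G_\zca)_{ij}-G_{ij}|\ge s\big)\le 2\exp\Big(-\frac{2N^2s^2}{N(2d_id_j)^2}\Big)=2\exp\Big(-\frac{Ns^2}{2d_i^2d_j^2}\Big).$$
Choosing $s=d_id_jt$ makes the right-hand side $2\cdot\frac{p}{b(b+1)}$.

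A union bound over the $b(b+1)/2$ unordered pairs $i\le j$ (using the symmetry of both matrices) then yields total failure probability at most $\frac{b(b+1)}{2}\cdot\frac{2p}{b(b+1)}=p$, which is exactly why $\ln(b(b+1)/p)$ appears. If some $d_i=0$, the corresponding $g_{ij}$ vanishes a.e., the entry is exact, and nothing needs to be checked.

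I expect the main point of care to be the independence and identical distribution of the $z_l$ with law $\mu$, together with the a.e. boundedness needed for Hoeffding's inequality; the latter is supplied by the assumption that the derivatives of the $f_i$ are bounded up to order $k$ on $X$. The rest is bookkeeping of constants.
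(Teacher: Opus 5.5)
Your proof is correct and follows the paper's argument essentially step for step. Like the paper, you apply Hoeffding's inequality to each Gram entry using the Cauchy--Schwarz bound $|g_{ij}|\le d_id_j$, take a union bound over the $b(b+1)/2$ pairs $i\le j$, and sum $|\omega_i||\omega_j|d_id_j$ to obtain the square. The differences are cosmetic: the paper normalizes $g_{ij}$ by $d_id_j$ and rescales the summands to $[-\frac{1}{2N},\frac{1}{2N}]$, while you spell out the $d_i=0$ case and the requirement that the $z_l$ be i.i.d.\ with law $\mu$, both of which the paper leaves implicit.
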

   \begin{proof}
   	%		\begin{eqnarray}\label{eq_Gij}\sum_{\alpha}\wbf_\alpha       \frac{\pa ^{|\alpha|}f_i}{\pa x^\alpha}(x)\cdot  \frac{\pa ^{|\alpha|}f_j}{\pa x^\alpha}(x) 
   		%	 & \le&   \left(\sum_{\alpha}\wbf_\alpha   \frac{\pa ^{|\alpha|}f_j}{\pa x^\alpha}(x)^2\right)^\frac{1}{2}    \left(\sum_{\alpha}\wbf_\alpha \frac{\pa ^{|\alpha|}f_j}{\pa x^\alpha}(x)^2 \right)^\frac{1}{2}\le d_id_j
   		%	\end{eqnarray}
   	Let (the case $d_j=0$ is left to the reader) $$g_{ ij}(x):=\frac{1}{d_id_j}\sum_{1\le |\alpha|\le k}\wbf_\alpha\left(   \frac{\pa ^{|\alpha|}f_i}{\pa x^\alpha}(x)\cdot \frac{\pa ^{|\alpha|}f_j}{\pa x^\alpha}(x) \right),$$
   	and notice that
   	\begin{eqnarray*}|g_{ ij}(x) |
   		& \le& \frac{1}{d_id_j} \left(\sum_{1\le |\alpha|\le k}\wbf_\alpha   \frac{\pa ^{|\alpha|}f_i}{\pa x^\alpha}(x)^2\right)^\frac{1}{2}    \left(\sum_{1\le |\alpha|\le k}\wbf_\alpha \frac{\pa ^{|\alpha|}f_j}{\pa x^\alpha}(x)^2 \right)^\frac{1}{2} 
   		\le  1.
   	\end{eqnarray*}
   	Applying Hoeffding's inequality to the random variables $X_{ij,1},\dots , X_{ij,N}$ defined as 
   	$X_{ij,\iota}:=\frac{1}{2N} g_{ij}(z_\iota)\in [-\frac{1}{2N},\frac{1}{2N}]$ we get for each $i\le j\le b$ 
   	$$\mathbb{P}(|\sum_{1\le \iota\le N} X_{ij,\iota}-\mathbb{E}\sum_{1\le \iota\le N} X_{ij,\iota}|\ge \ep) \le 2\exp\left( -2\ep^2N\right).$$
   	Hence, for $p\in (0,1)$, cumulating these probabilities,  we get that  with probability at least $ (1-p)$ all the absolute value of the coefficients of the matrix $(G_\zca-G)$ are smaller than $2d_id_j\sqrt{\frac{\ln (b(b+1)/p)}{2N}}$. As a matter of fact, we have with probability at least $(1-p)$ for all $\omega\in \R^b$:
   	$$| \omega^\mathbf{t}(G_\zca-G)\omega| \le \sum_{i,j} 2d_id_j|\omega_i|\cdot|\omega_j|  \sqrt{\frac{\ln (b(b+1)/p)}{2N}}=  \left(\sum_j  d_j |\omega_j|  \right)^2\sqrt{\frac{2\ln (b(b+1)/p)}{N}},$$
   	which yields the desired bound.
   	%	\begin{eqnarray*}u(x)^2 
   		%		&=&  \sum_{i,j}\omega_i  \omega_j  \sum_{\alpha}\wbf_\alpha\left(   \frac{\pa ^{|\alpha|}f_i}{\pa x^\alpha}(x)\cdot \frac{\pa ^{|\alpha|}f_j}{\pa x^\alpha}(x) \right)
   		%		 \\& \le& \sum_{i,j} |\omega_i |\cdot | \omega_j  |  \left(\sum_{\alpha}\wbf_\alpha   \frac{\pa ^{|\alpha|}f_j}{\pa x^\alpha}(x)^2\right)^\frac{1}{2}    \left(\sum_{\alpha}\wbf_\alpha \frac{\pa ^{|\alpha|}f_j}{\pa x^\alpha}(x)^2 \right)^\frac{1}{2} 
   		%		  \\& \le& \sum_{i,j} |\omega_i |\cdot | \omega_j  | d_id_j
   		%		   \\& \le& \left( \sum_{i} |\omega_j  | d_i\right)^2
   		% \end{eqnarray*}
   \end{proof}
   The bound provided by the above theorem involves a term $O(\sqrt{\ln b})$ because we estimated the error on each coefficient in order to estimate the error on the matrices. In the proof of the theorem below, we rely on McDiarmid's inequality so as to argue on the level of matrices and get rid of this factor, which provides another bound (not comparable with the preceding one).

\begin{thm}\label{thm_sph}
	Let $G_x$    denote the  Gram matrix at $x\in \omd$, i.e. $$G_{x,ij}:=\sum_{\alpha}\wbf_\alpha   \frac{\pa ^{|\alpha|}f_i}{\pa x^\alpha}(x)\cdot \frac{\pa ^{|\alpha|}f_j}{\pa x^\alpha}(x).$$
	With probability at least $(1-p)$, $p\in (0,1)$, we have  for all $u=\sum_{i=1}^b \omega_i f_i, \omega\in \R^b,$
	\begin{equation}\label{eq_sph}
	||u||^2_{\wbf}\le ||u||^2_\zca+\eta\, |\omega| ^2,
	\end{equation}
where 
$$\eta = \sqrt{\frac{||\tra (G_x)||_{L^\infty_\mu(\omd)}||\tra (G_x)||_{L^1_\mu(\omd)} }{N}}+||\tra (G_x)||_{L^\infty_\mu(\omd)}\sqrt{\frac{\ln \frac{1}{p} }{2N}}.$$
	\end{thm}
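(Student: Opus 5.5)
The approach is to control the operator norm of $G_\zca-G$ with McDiarmid's inequality, using the spectral norm instead of entrywise bounds. Note first that $G=\int_\omd G_x\,d\mu(x)$ and $G_\zca=\frac1N\sum_{\iota=1}^N G_{z_\iota}$. Then $\|u\|_\wbf^2-\|u\|_\zca^2=\omega^{\mathbf t}(G-G_\zca)\omega\le \|G-G_\zca\|\,|\omega|^2$. It therefore suffices to show that, with probability at least $(1-p)$, $\|G_\zca-G\|\le\eta$.

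Each $G_x$ is positive semi-definite: writing $v_\alpha(x):=\sqrt{\wbf_\alpha}\,(\frac{\pa^{|\alpha|}f_i}{\pa x^\alpha}(x))_{i\le b}\in\R^b$, we have $G_x=\sum_\alpha v_\alpha(x)v_\alpha(x)^{\mathbf t}$. Hence $\|G_x\|\le \tra(G_x)\le T:=\|\tra(G_x)\|_{L^\infty_\mu(\omd)}$.

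Set $F(z_1,\dots,z_N):=\|G_\zca-G\|$. Replacing one $z_\iota$ by $z'_\iota$ changes $G_\zca$ by $\frac1N(G_{z'_\iota}-G_{z_\iota})$. This is a difference of two PSD matrices, each of norm at most $T$, so its norm is at most $T/N$ (for PSD $A,B$, $\|A-B\|\le\max(\|A\|,\|B\|)$). By the triangle inequality $F$ therefore has bounded differences $c_\iota=T/N$. McDiarmid's inequality gives $\mathbb P(F\ge \mathbb E F+\ep)\le \exp(-2\ep^2/(N T^2/N^2))=\exp(-2N\ep^2/T^2)$. Choosing $\ep=T\sqrt{\ln(1/p)/(2N)}$ produces exactly the second term of $\eta$.

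It remains to show $\mathbb E F\le\sqrt{T\,\|\tra(G_x)\|_{L^1_\mu}/N}$. This is the main step, and the plan is to bound the spectral norm by the Frobenius norm and use independence. By Jensen, $\mathbb E F\le(\mathbb E\|G_\zca-G\|_F^2)^{1/2}$. Since the $G_{z_\iota}-G$ are i.i.d. and centered in the Hilbert space of matrices with the Frobenius inner product, the cross terms vanish and $\mathbb E\|G_\zca-G\|_F^2=\frac1N\mathbb E\|G_{z}-G\|_F^2\le\frac1N\mathbb E\|G_z\|_F^2$, the last step because the variance is at most the second moment. Finally, for PSD $G_z$, $\|G_z\|_F^2=\tra(G_z^2)\le\|G_z\|\,\tra(G_z)\le T\,\tra(G_z)$. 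Taking expectations gives $\mathbb E\|G_z\|_F^2\le T\,\|\tra(G_x)\|_{L^1_\mu(\omd)}$, which is the first term of $\eta$.

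The delicate point is this expectation bound: one has to remember to use $\tra(G_z^2)\le\|G_z\|\tra(G_z)$, rather than the cruder $\tra(G_z)^2$, so that the $L^1$ norm appears. Measurability of $F$ and boundedness of the derivatives on $X$, which is the support of $\mu$ where the $z_\iota$ lie almost surely, are routine. Combining the two estimates yields (\ref{eq_sph}).
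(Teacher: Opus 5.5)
Your proof is correct and is essentially the paper's argument. Both use McDiarmid's inequality with bounded differences $\|\tra(G_x)\|_{L^\infty_\mu(\omd)}/N$, obtained from the positive semi-definiteness of the $G_x$, and both control the expectation via the Frobenius norm, Jensen, independence of the centered terms, and $\tra(A^2)\le\lambda_{\max}(A)\,\tra(A)$. The only difference is that you apply McDiarmid to $\|G_\zca-G\|$ (using the triangle inequality) instead of to $\lambda_{\max}(G-G_\zca)$ (using Weyl's inequality); this costs nothing and even gives the two-sided bound $\bigl|\|u\|_\wbf^2-\|u\|_\zca^2\bigr|\le\eta\,|\omega|^2$.
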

\begin{proof}  
 Define a function $F: X^{N} \to \R$ by setting for  $\zca= (z_1,\dots,z_{N})\in X^{N}  $
	 $$F(\zca):= \lambda_{max}({G}-{G}_\zca),$$
	 where $\lambda_{max}$ denotes the biggest eigenvalue of this matrix. Note that, in virtue of Weyl's inequality, we have for any $i$, if $\zca=(z_1,\dots,z_{N})$ and $\zca'=(z_1,\dots, z_{i-1},z_i',z_{i+1}, \dots,z_{N})$
	 $$|F(\zca) -F(\zca') |\le \frac{ ||{G}_{z_i}- {G}_{z_i'}||}{N}, $$
	 so that, since $G_{z_i}$ and $G_{z'_i}$ are both positive symmetric matrices ($||A||=\sup_{|v|=1} |<v,Av>|$ for all $A$ symmetric):
	 $$|F(\zca)-F(\zca')|\le \frac{\max(||G_{z_i}||,||G_{z_i'}|| )}{N} \le  \frac{||{G}_x||_{L^\infty_\mu (\omd)}}{N}\le  \frac{||\tra \,{G}_x||_{L^\infty_\mu (\omd)}}{N}.$$

	 Applying McDiarmid's inequality to $F$ and  $\zca$ (now regarded as a finite family of random variables), we thus get for every $s>0$:
	 \begin{equation}\label{eq_mac}\mathbb{P}(F(\zca)\ge \mathbb{E}(F)+s)\le \exp\left(\frac{-2Ns^2}{||\tra \,G_x||^2_{L^\infty_\mu (\omd)} }\right). \end{equation}
	 Notice that, denoting by $<U,V>_{Fr}$ the Frobenius scalar product $\tra (U^\mathbf{t} V)$ and by $||.||_{Fr}$ the derived norm, we have: 
	 $$\mathbb{E} (F )\le \mathbb{E}(||{G}-{G}_\zca||_{Fr})
	  \le \sqrt{\frac{\mathbb{E}(||\sum_{i\le N }(\mathbb{E}({G}_x)-{G}_{z_i})||_{Fr}^2)}{N^2 }}= \sqrt{\frac{\mathbb{E}(||\mathbb{E}({G}_x)-{G}_{z_1}||_{Fr}^2)}{N }} $$
	% = \frac{\sqrt{\mathbb{E}(\sum_{z\in \zca}||\mathbb{E}({G}_x)-{G}_z||_{Fr}^2})}{N } 
	 since the $z_i$ are i.i.d. random variables and $(\mathbb{E}({G}_x)-{G}_{z_i}) $ has mean zero for all $i$ (the first inequality is from the definition of the Frobenius norm and the second is Jensen's inequality). But then 
	 $$ \sqrt{\frac{\mathbb{E}(||\mathbb{E}({G}_x)-{G}_{z_1}||_{Fr}^2)}{N }} = \sqrt{\frac{\mathbb{E}(||{G}_x||_{Fr}^2-||{G}||_{Fr}^2)}{N}}\le \sqrt{\frac{\mathbb{E}(||{G}_x||_{Fr}^2)}{N}}= \sqrt{\frac{\mathbb{E}(\tra\, {G}_x^2)}{N}},$$
	 from which we can conclude (using that $\tra A^2 \le  \lambda_{max}(A)\cdot \tra\, A$ for all $A\succeq 0$),
	  \begin{equation*}
	  	\mathbb{E} (F )\le \sqrt{\frac{||\lambda_{max}({G_x})||_{L^\infty_\mu(\omd)}\cdot \mathbb{E}( \tra\,{G}_x) }{N}}\le 
	  	\sqrt{\frac{||\tra\, {G_x}||_{L^\infty_\mu(\omd)}\cdot \mathbb{E}( \tra\,{G}_x) }{N}},
	  \end{equation*}
  since ${G}_x\succeq 0$. Now, applying (\ref{eq_mac}) with $$s:=||\tra\, {G_x}||_{L^\infty_\mu(\omd)}\sqrt{\frac{\ln 1/p}{2N}}, \quad p\in (0,1),$$ 
  and $\eta$ as in the statement of the theorem, we get that with probability at least $(1-p)$:
  $$F(\zca)\le \sqrt{\frac{||\tra\, {G_x}||_{L^\infty_\mu(\omd)}\cdot \mathbb{E}( \tra\,{G}_x) }{N}}+s. $$
  But since for all $\omega\in \R^b$
  $$||u||_\wbf^2-||u||_\zca^2 = \omega^\mathbf{t} (G-G_\zca)\omega \le |\omega |^2 F(\zca),$$
  this yields the desired estimate.
\end{proof}

%An interesting case is $B:=\frac{1}{\ell}A^\mathbf{t}A+\tau Id_{\R^b}$ where $\tau$ is the threshold appearing in (\ref{eq_mz_Z_tau}), which leads to  
%\begin{equation*}\omega^\mathbf{t}  B\omega \le \frac{1}{\ell} |A\omega|^2+ \tau |\omega|^2 \le   \frac{1}{\ell} |A\omega|^2+  \omega^\mathbf{t}M\omega\end{equation*}
% In the case where $\omega$ is the solution of (\ref{eq_mz_Z_tau}), we have thanks to minimality, $|A\omega|\le |\ybf|$ and $\omega M\omega \le  \frac{1}{\ell} |A\omega|^2$ so that we get for this $\omega$: 
% \begin{equation*}\label{eq_sph}\omega^\mathbf{t}  B\omega  \le \frac{2|\ybf|^2}{\ell}.\end{equation*}
% and consequently by (\ref{}) \begin{equation}
% 	||u||^2_{\wbf}\le ||u||^2_\zca+\eta\frac{2|\ybf|^2}{\ell}.,
% \end{equation}
\begin{rem}\label{rem_B}
The following generalization of the above theorem provides other (incomparable) statistical bounds.	Let $B$  be a symmetric positive  definite $b\times b$ matrix	 and set $$D_x:=B^{-1/2}G_xB^{-1/2}.$$
		With probability at least $(1-p)$, $p\in (0,1)$, we have  for all $u=\sum_i \omega_if_i$, $\omega\in \R^b$,
	\begin{equation*}
		||u||^2_{\wbf}\le ||u||^2_\zca+\eta\, \omega ^\mathbf{t}B \omega,
	\end{equation*}
	where 
	$$\eta = \sqrt{\frac{||\tra (D_x)||_{L^\infty_\mu(\omd)}||\tra (D_x)||_{L^1_\mu(\omd)} }{N}}+||\tra (D_x)||_{L^\infty_\mu(\omd)}\sqrt{\frac{\ln \frac{1}{p} }{2N}}.$$
	In particular, for $B=\frac{1}{\ell}A^\mathbf{t}A +\tau Id$, where $A$ is as in (\ref{eq_mz_Z_tau}), this gives a bound that takes in consideration the threshold $\tau$.   
	\end{rem}

\section{Numerical experiments}\label{sec_tests}
	\subsection{Regression problems on real algebraic varieties.
	}
	We   focused on algebraic hypersurfaces of $\R^4$, starting with the semi-algebraic set
	$$ X:=\{ (x,y,z,t)\in [0,1]^4: y^2+z^2+t^2=x^4 \mbox{ or } y^2+z^2+t^2=2x^4\mbox{ or }t+z+2y=2x \},$$
	fixing a sample of $4000$ points on this set (or very close) for the training phase and a test sample of $5000$ points. The clouds were generated by $60$ steps of gradient descent of random points. Such points are distributed according to the push-forward under a polynomial map, which corresponds to the setting of this article. Figure \ref{fig:2cones} shows a representation  in $\R^3$   (the variable $t$ has been dropped for this purpose but this does not affect the main properties of the singular point; the experiments use $X\subset[0,1]^4$).  	 This is typically an example of semi-algebraic set for which the inner metric is not equivalent to the euclidean metric.	
	\begin{figure}[ht]
		\centering
		\includegraphics[width=0.75\textwidth]{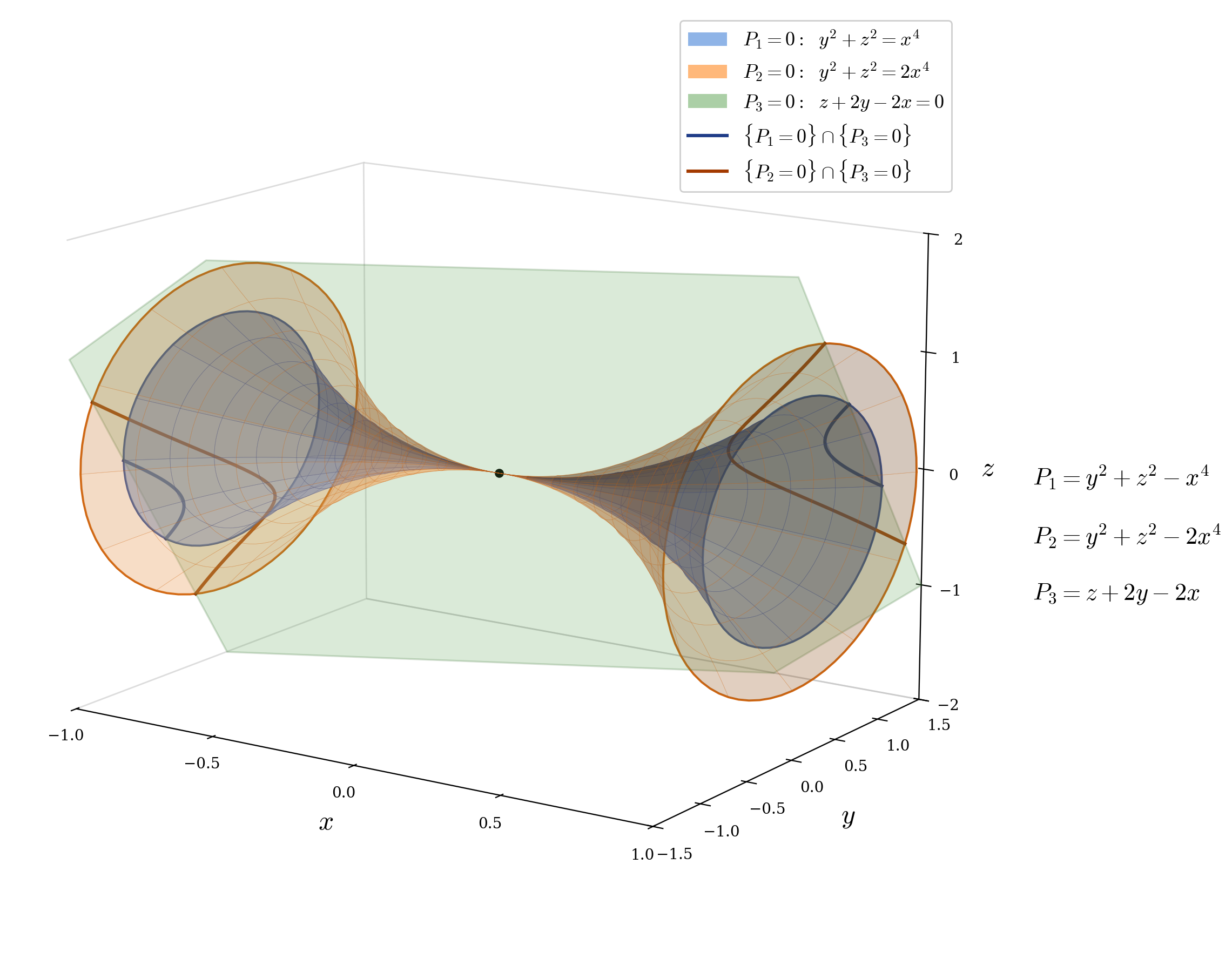}
		\caption{Three-dimensional visualization of the singular algebraic hypersurface defined by $ P_1 \cdot P_2\cdot P_3 =0$ (the set $X$ is its intersection with $[0,1]^4$).}
		\label{fig:2cones}
	\end{figure}

	 The target function was taken equal to
	\begin{equation}\label{eq_target}|Q_1(x,y,z,t)|+ |Q_2(x,y,z,t)|, \end{equation}
	with $$Q_1(x,y,z,t)=z^3 + ty^2 - 2zx^2
 \et Q_2(x,y,z,t)=t^3 - zyx + 2tx^2.$$ The absolute value is here to ensure a certain lack of smoothness of the target, so as to make the problem a bit more difficult.   We applied the greedy algorithm described in Section \ref{sect_algo}, solving the ridge problem (\ref{eq_mz_Z_tau}) on $12$ levels. The $i^{th}$-level dictionary  was constituted by $4000$ Gaussians, centered at the points of the clouds (labeled or not),   the radii of the Gaussians  being allowed to go from $1.5. (0.8)^{i-1}$  to  $1.5. (0.8)^{i+2}$, and we selected the $800$ functions that are the most correlated with the residual train (the Gaussians being rather similar functions, we did not take any normalization before estimating the correlation). The weights of the partial derivatives  of order $1\le j\le 3$  of the (squared) Sobolev norm were chosen equal to $10^{-4-j}$ (the other weights chosen as $0$) which proved sufficient. No effort was made to compute the minimal number of derivatives to ensure a well-posed problem on the entire Sobolev space.  The Gram matrix was computed using a random sample $1000$ of points (possibly unlabeled, this sample was different at each level) and  we set $\tau= 10^{-6}$ (see (\ref{eq_mz_Z_tau}) for $\tau$).  We carried out $20$ runs, varying the number of labeled points $100,200, \dots, 2000$, making no parameter tuning  between the runs.  Figure \ref{fig:graphes} displays the variation of the MSE Test and Train as functions of the number of labeled examples. As one could expect, the result decays as the labeled data are growing and the two graphs are getting closer.  During the greedy procedure, the MSE test decreased by up to 93 percent (by about 90 percent from 800 labeled points on).
  For comparison, we computed the MSE test of RBF Kernel Ridge Regression optimized with Grid Search, and our results are from 7 to 51 percent better (above 20 percent up to 1400 labeled points).

   An interesting point  of our test is that the MSE train and test were very correlated during the first levels, making it possible to estimate the potential MSE test gain of the last levels. The MSE train however started to decay faster than the MSE test on the last levels.  These levels only generating very low gain, an early stopping was added to interrupt the greedy algorithm when the MSE train is less than $5.10^{-5}$ so as to avoid overfitting and save time.

  \begin{figure}[ht]
  	\centering
  	\includegraphics[width=\textwidth]{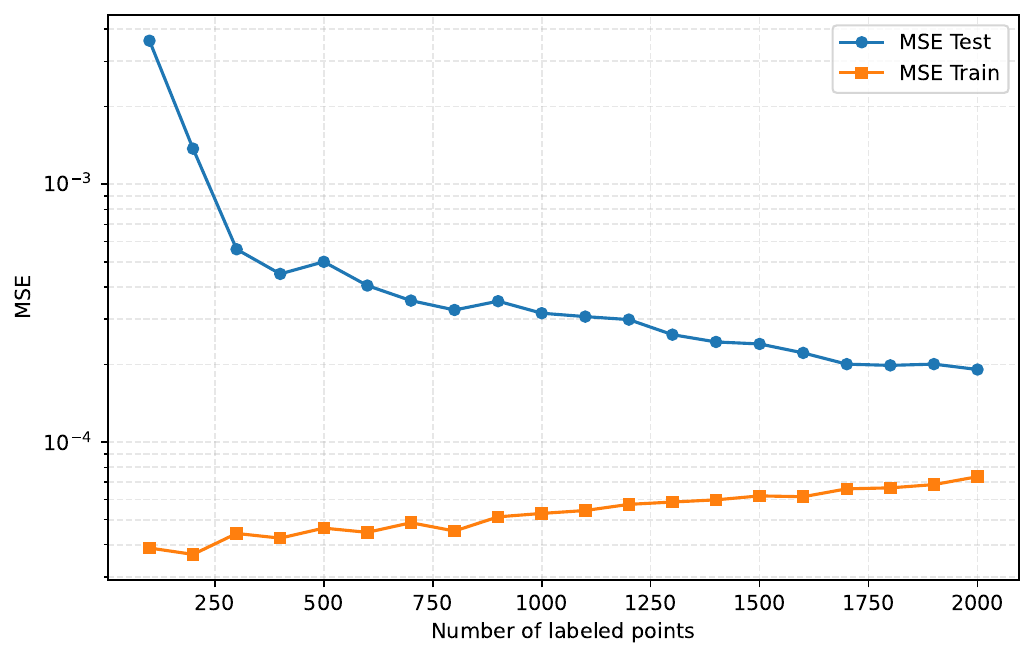}
  	\caption{MSE on the training and test sets as a function of the number of training samples on a logarithmic scale. %The training MSE is shown on a logarithmic scale, while the test MSE is shown on a linear scale.
  	}
  	\label{fig:graphes}
  \end{figure}
  
  We then studied random generalizations of this problem, taking the support of the cloud to be supported by a set $$X:=\{x\in [0,1]^4: P_1(x)\cdot P_2(x)\cdot P_3(x)=0 \},$$
  where the $P_i$ are random polynomials of degree $3$ vanishing at the origin (normalized by  the  square root of the mean  of squares of their coefficients and limited to 20 coefficients). The polynomials $Q_1$ and $Q_2$ were now   of degree $4$ and generated like the $P_i$ (random, limited to 20 coefficients, and normalized).  To keep the targets of the different random seeds within a comparable range, the absolute values were truncated at $1$, i.e. the target was $\min(|Q_1|,1) + \min(|Q_2|,1) $ (the same target was of course used for kernel ridge regression). 
  
  We made up to 18 levels.  The radii of the Gaussians of the level $i$ was however set equal to $1.5. (0.8)^i$ and the sample $\zca$ was composed by 2000 points, with all the other parameters as in the above particular case.
  The number of labeled points was fixed to $600$, and we only changed the random seeds (responsible for the cloud, the $P_i$, and the $Q_i$) at each run. Here we stress the fact that although the hyperparameters were set manually, there was no parameter tuning on each random seed and no validation was needed.   The average MSE test was about $4.8\cdot10^{-3}$ (with standard deviation $3.7\cdot10^{-3}$). For comparison, the average MSE test of ridge RBF on the $10$ runs amounted to $5.3\cdot10^{-3}$. Eight out of the ten runs gave results $5$ to $20$ percent better than the MSE test of RBF Kernel Ridge Regression optimized with Grid Search, while $2$ of them were less good.
	
	To be fully valid, the statistical bounds of Section \ref{sect_stat} require the sample $\zca$ to be drawn independently of the basis of functions. Since part  of the centers of the Gaussians are taken in the labeled data, this forces to exclude these data when drawing $\zca$, which was not done in the above tests. These statistical bounds were thus estimated on extra runs, assuming $p=10^{-3}$ and drawing $\zca$ as fresh samples, independent of the cloud, and, as mentioned in the preceding section, stayed contained by the rapid decay of the residual, although still being much higher than the actual Sobolev energy of the minimizer (evaluated empirically on a $20000$ points random sample).

  	\subsection{Application to classification problems.} We then applied our method to a classification problem  3,5 vs 8 on MNIST data downsampled  into $7\times7$. As above, the test was carried out on a sample of $4000$ examples (labeled and unlabeled), from $10$ different random seeds, and the results were  averaged.  We successively assumed  500, 1000, 1500, 2000 elements of this sample to be labeled. As explained in Section \ref{sect_dir}, we used the products of directional derivatives which were averaged among 50 random directions. The results of our multi-isotropic greedy algorithm (MISO) appear in Table 1. Kernel ridge regression is included as a reference; the accuracies obtained are of the same order.   Obtaining satisfying results actually demanded to raise the weights of order $j$ to $2.10^{1-j}$, $j=1,2,3$, as well as the radius and eigenvalue threshold (adjusted manually but identical for all the seeds), likely due to the high ambient dimension which increases distances.   The architecture was as described above, the dictionaries being constituted by 4000 Gaussian functions centered at sample points, from which we extracted 800 candidates at each level as above. The radius of the Gaussians was 12 at the first level decaying by 20 percent at every of the 10 levels. The accuracy at the first level was between $91.6\%$ and $94.3\%$ to finish as displayed in the table.  A feature of the method is that the norm is independent of the chosen dictionary, which made it possible to test other functions, like anisotropic Gaussians, Wendlands, or polynomial functions. We chose to work with isotropic Gaussian for the present tests because this lead to the most satisfying results. Adding a first level exploring polynomials of degree 2 (and then Gaussian levels) however provided a very slight gain ($0.2-0.3 \%$) and it seems that other dictionaries would deserve to be tested.

	\begin{table}[ht]
		\centering
	%	\caption{Performance on MNIST $3,5$ vs.\ $8$ over 10 random seeds.}
%		\label{tab:mnist}
		\begin{tabular}{c|cc|cc}
			$n_{\mathrm{train}}$
			& \multicolumn{2}{c|}{Accuracy}
			& \multicolumn{2}{c}{AUC} \\
			& MISO & RBF ridge & MISO & RBF ridge \\
			\hline
			500
			& $95.2 \pm 0.3$
			& $94.6 \pm 0.3$
			& $0.988 \pm 0.002 $
			& $0.985 \pm 0.002$ \\
			1000
			& $96.2 \pm 0.3 $
			& $95.9 \pm 0.3$
			& $0.993 \pm 0.001 $
			& $0.991 \pm 0.001$ \\
			1500
			& $96.7 \pm 0.3$
		& $96.5 \pm 0.2$
		& $0.994 \pm 0.001$
		& $0.993 \pm 0.001$ \\
			2000
				& $96.9\pm 0.3$
			& $96.7 \pm 0.2$
			& $0.995 \pm 0.001$
			& $0.994 \pm 0.001$ \\
		\end{tabular}
	\caption{Accuracy rate and AUC on MNIST $3,5$ vs.\ $8$, averaged over
		10 random seeds. Values are reported as mean $\pm$ standard deviation.}%;	brackets give the signed deviations of the minimum and maximum valuesfrom the mean

	\end{table}

\section*{Acknowledgments}
%The author thanks [Name(s)] for helpful discussions and comments on an
%earlier version of this manuscript, and the anonymous referee(s) for
%their careful reading and suggestions, which improved the presentation
%of the paper.

This work was supported by NCN funder grant 2021/43/B/ST1/02359.
During the preparation of this work, the author used  Claude (Anthropic), ChatGPT (OpenAI), Grok (X),
 AI language models, to improve the language and readability of the
manuscript, implement algorithms in Python, check computations, suggest reformulations of some
arguments.  The statement and proof of Theorem \ref{thm_sph} were
initially proposed by Claude, the rest of the mathematical content is due to the author. All tests were run on Google Colab. All content  
was  reviewed and edited  by the author, who  takes full responsibility for the content of
this publication.

\end{document}